\newtheorem{theo}{Theorem}[section]
\newtheorem{cor}[theo]{Corollary}
\newtheorem{lem}[theo]{Lemma}
\newtheorem{rem}[theo]{Remark}
\newtheorem{assump}{Assumption}
\begin{document}
\title{A finite sample analysis of generalisation for  minimum norm interpolators in the ridge function model with random design }
%
%
\author{Emmanuel Caron
\thanks{Laboratoire de Math\'ematiques d'Avignon (LMA), 
Université d’Avignon
Campus Jean-Henri Fabre
301, rue Baruch de Spinoza,
BP 21239
F-84 916 AVIGNON Cedex 9. Most of this work was done while the author was with the ERIC Laboratory, University of Lyon 2, Lyon, France} \and Stéphane Chrétien\thanks{ERIC Laboratory and UFR ASSP, University of Lyon 2, 
5 avenue Mendes France, 69676 Bron Cedex. He is invited researcher at the Mathematical Institute, University of Oxford, Oxford OX2 6GG, UK, the National Physical Laboratory, 1 Hampton Road, Teddington, TW11 0LW, UK, The Alan Turing Institute, British Library, 96 Euston Road, London NW1 2DB, UK, and FEMTO-ST Institute/Optics Department, 15B Avenue des Montboucons, F-25030 Besan{\c c}on, France.}}

\maketitle

\begin{abstract} 
Recent extensive numerical experiments in high scale machine learning have allowed to uncover a quite counterintuitive phase transition, as a function of the ratio between the sample size and the number of parameters in the model. As the number of parameters $p$ approaches the sample size $n$, the generalisation error increases, but surprisingly, it starts decreasing again past the threshold $p=n$. This phenomenon, brought to the theoretical community attention in \cite{belkin2019reconciling}, has been thoroughly investigated lately, more specifically for simpler models than deep neural networks, such as the linear model when the parameter is taken to be the minimum norm solution to the least-squares problem, firstly in the asymptotic regime when $p$ and $n$ tend to infinity, see e.g. \cite{hastie2019surprises}, and recently in the finite dimensional regime and more specifically for linear models \cite{bartlett2020benign}, \cite{tsigler2020benign}, \cite{lecue2022geometrical}. In the present paper, we propose a finite sample analysis of non-linear models of \textit{ridge} type, where we investigate the \textit{overparametrised regime} of the double descent phenomenon for both the \textit{estimation problem} and the \textit{prediction} problem. Our results provide a precise analysis of the distance of the best estimator from the true parameter as well as a generalisation bound which complements recent works of \cite{bartlett2020benign} and \cite{chinot2020benign}. Our analysis is based on tools closely related to the continuous Newton method \cite{neuberger2007continuous} and a refined quantitative analysis of the performance in prediction of the minimum $\ell_2$-norm solution.
\end{abstract}

\section{Introduction}

The tremendous achievements in  deep learning theory and practice 
have received great attention in the applied Computer Science, Artificial Intelligence and Statistics communities in the recent years. Many success stories related to the use of Deep Neural Networks have even been reported in the media and most data scientists are proficient with the Deep Learning tools available via opensource machine learning libraries such as Tensorflow, Keras, Pytorch and many others. 

One of the key ingredient in their success is the huge number of parameters involved in all current architectures. While enabling unprecedented expressivity capabilities, such regimes of overparametrisations appear very counterintuitive through the lens of traditional statistical wisdom. Indeed, as intuition suggests, overparametrisation often results in interpolation, i.e. zero training error. The expected outcome of such endeavors should result in very poor generalisation performance. Nevertheless interpolating deep neural networks still generalise well despite achieving zero or very small training error. 

Belkin et al. \cite{belkin2019reconciling} recently addressed the problem of resolving this paradox, and brought some new light on the complex and perplexing relationships between interpolation and generalization. In the linear model setting, in the regime where the weights are the minimum norm solution to the least-squares problem, overfitting was proved to be benign under strong design assumptions (i.i.d.) random matrices, when $p$ and $n$ tend to infinity, see e.g. \cite{hastie2019surprises}. More precise results were recently obtained in the finite dimensional regime and more specifically for linear models \cite{bartlett2020benign}, \cite{tsigler2020benign}, \cite{lecue2022geometrical}. Non-linear settings have also been considered such as in the particular instance of kernel ridge regression as studied in \cite{belkin2018understand}, \cite{mei2022generalization}, \cite{liang2018just} proved that interpolation can coexist with good generalization. More recent results in the nonlinear direction are, e.g.  \cite{frei2022benign} for shallow neural networks using a very general framework that combines the properties of gradient descent for a binary classification problem, but somehow restrictive assumptions on the dimension of the input of the form $n\|\mu\|^4 \gg p \geq C \max \left\{n\|\mu\|^2, n^2 \log (n / \delta)\right\}$. 
In this paper, we consider a statistical model where $\phi(X;\theta)=f(X_i^\top \theta)$, i.e. a model of the form 
\begin{align}
    \mathbb E[Y_i\mid X_i] & = f(X_i^\top\theta^{\ast} ), \qquad i=1,\ldots, n,
    \label{model}
\end{align} 
where $\theta^* \in \mathbb{R}^{p}$ and the function $f$ is assumed increasing. This setting is also known as the single index model and rigde function model. When the estimation of  $\theta^*$ is performed using Empirical Risk Estimation, i.e. by solving 
\begin{align}
\hat{\theta} & = \mathrm{argmin}_{\theta \in \Theta} \ \frac{1}{n} \sum_{i=1}^{n} \ell(Y_{i}- f(X_{i}^\top\theta))
\label{erm}
\end{align}
for a given smooth loss function $\ell$, we derive an upper bound on the prediction error that gives precise order of dependencies with respect to all the intrinsic parameters of the model, such as the dimensions $n$ and $p$, as well as various bounds on the derivatives of $f$ and of the loss function used in the Empirical Risk Estimation. 

Our contribution improves on the literature on the non-asymptotic regime of benign overfitting for non-linear models by providing concentration results on generalisation instead of controlling the expected risk only. More precisely, our results quantify the proximity in $\ell_2$ of a certain solution $\hat \theta$ of \eqref{erm} to $\theta^*$. Our proof of this first result utilises an elegant continuous Newton method argument initially promoted by Neuberger \cite{castro2001inverse}, \cite{neuberger2007continuous}. From this, we obtain a quantitative analysis of the performance in prediction of the minimum $\ell_2$-norm solution based on a careful study relying on high dimensional probability.


%

\section{Distance of the true model to the set of empirical risk minimisers}

We study a non-isotropic model for the rows $X_i^\top$ of the design matrix, and how the Cholesky factorisation of the covariance matrix can be leveraged to improve the prediction results. Let us now describe our mathematical model.

\subsection{Statistical model}

In this section, we make general assumptions on the feature vectors.

\begin{assump}
We assume that \eqref{model} holds and that $f$ and the loss function $\ell$ : $\mathbb{R} \mapsto \mathbb{R}$ satisfy the following properties
\begin{itemize}
    \item $f$ is strictly monotonic,
    \item $f'(z) \leq C_{f'}$ for some positive constant $C_{f'}$,
    \item $\ell'(0)=0$ and $\ell''$ is upper bounded by a constant $C_{\ell''}>0$.
\end{itemize}
\end{assump}

Concerning the statistical data, we will make the following set of assumptions

\begin{assump}
\label{assump_aniso_X}
We will require that 
\begin{itemize}
    \item the random column vectors $X_1,\ldots,X_n$ with values in $\mathbb R^p$ are independent random vectors, which can be written as $AZ_i$ for some matrix $A$ in $\mathbb R^{p\times p}$, with at least $n$ non zero singular values, and some independent  subGaussian vectors $Z_1,\ldots,Z_n$ in $\mathbb{R}^{p}$, with $\phi_2$-norm upper bounded by $K_Z$. The vectors $Z_i$ are such that the matrix
    \begin{align*}
    Z^\top & = 
    \begin{bmatrix}
        Z_1,
        \ldots,
        Z_n 
    \end{bmatrix}
\end{align*}
is full rank with probability one. We define $X=ZA^\top$; 
    \item for all $i=1,\ldots,n$, the random vectors $Z_{i}$ are assumed 
    \begin{itemize}
        \item to have a second moment matrix equal to the identity, i.e. $\mathbb{E} [Z_{i} Z_i^\top] = I_{p}$  
        \item to have $\ell_2$-norm equal\footnote{notice that this is different from the usual regression model, where the columns are assumed to be normalised.}  to $\sqrt{p}$.
    \end{itemize}

\end{itemize}  

\end{assump}

\begin{assump}
    The errors $\varepsilon_{i} = Y_{i} - \mathbb E[Y_{i}\mid X_i]$ are assumed to be independent subGaussian centered random variables with $\psi_2$-norm upper bounded by $K_\varepsilon$. 
\end{assump}

The performance of the estimators is measured by the theoretical risk $R : \Theta \mapsto \mathbb{R}$ by
\[R(\theta) = \mathbb{E} \Big[ \ell(Y- f(X^\top\theta)) \Big].\]

In order to estimate $\theta^*$, the Empirical Risk Minimizer $\hat{\theta}$ is defined as a solution to the following optimisation problem
\begin{align}
\hat{\theta} & \in \mathrm{argmin}_{\theta \in \Theta} \ \hat{R}_{n}(\theta), 
\label{optim_problem}
\end{align}
with 
\begin{align}
    \hat R_n(\theta) 
& = \frac{1}{n} \sum_{i=1}^{n} \ell(Y_{i}- f(X_{i}^\top\theta)).
\end{align}

Let us make an important assumption for what is to follow
\begin{assump}
Assume that the loss function $\ell$ and the function $f$ are such that
\begin{align}
     \vert\ell''(Y_i-f(X_i^\top \theta)) \ f'(X_i^\top \theta)^{2} - \ell'(Y_i-f(X_i^\top \theta)) \ f''(X_i^\top \theta)\vert & \ge \delta >0
    \label{lfcond}
\end{align}
for all $\theta$ in $\mathcal B_2(\theta^*,r)$, which is the $\ell_2$-ball of radius $r$ centered at $\theta^*$.
\label{assump_lfcond_aniso}
\end{assump}

Let us check Assumption \ref{assump_lfcond_aniso} in various simple cases. 
\begin{itemize}
    \item In the case of linear regression, we have $\ell(z)=\frac12 z^2$,  $\ell'(z)=z$,  $\ell''(z)=1$
 and 
 $f(z)=z$, $f'(z)=1$, $f''(z)=0$. Thus, we get  $\delta = 1$. 

\item In the case of the Huber loss and the Softplus function, we have \newline
$\ell(z)= \begin{cases}\frac{1}{2} z^2 & \text { for }|z| \leq \eta \\ \eta \cdot\left(|z|-\frac{1}{2} \eta\right), & \text { otherwise }\end{cases}$, $\quad$ $\ell'(z)= \begin{cases}
    z & \text { for }|z| \leq \eta \\ 
        \eta  & \text{ if } z > \eta \\
        - \eta & \text{ if } z <-\eta,
    \end{cases}$, $\quad$ $\ell''(z)= \begin{cases}
        1 & \text { for }|z| \leq \eta \\ 
        0  & \text{ if } \vert z\vert  > \eta
    \end{cases}$ \newline
and $f(z)=1/(1+\exp(-z))$, $f'(z)=\exp(-z)/(1+\exp(-z))^2$ and $f''(z)=\frac{2 e^{-2 z}}{\left(e^{-z}+1\right)^3}-\frac{e^{-z}}{\left(e^{-z}+1\right)^2}$.\\
\noindent Moreover, $f'(z)/f''(z)>1$. Then, when $\eta <1$, we can take $\delta =1-\eta$ as long as $\vert Y_i-f(X_i^\top \theta)\vert \le \eta$ for all $i=1,\ldots,n$ and all $\theta \in \mathcal B_2(\theta^*,r)$.
\end{itemize}

\textbf{Notation :} We denote by $s_i(A)$ the $i$-th singular value of the matrix $A$, in decreasing order, i.e. $s_1(A) \geq s_2(A) \geq \ldots \geq s_n(A)$. When $s_n(A)$ is non-zero, we denote by $\kappa_n(A) = \frac{s_1(A)}{s_n(A)}$ the n-th condition number of $A$.

\subsection{Distance to empirical risk minimisers via Neuberger's theorem}

Our main result in this section is the following Theorem. Their proofs are given in Section \ref{mainproof_under_aniso} and Section \ref{mainproof_over_aniso} in the appendix.  

\begin{theo}(Underparametrised setting)
\label{mainth_under_aniso}

Let $C_{K_{Z}}$ and $c_{K_{Z}}$ be positive constants depending only on the subGaussian norm $K_{Z}$. Let $\alpha$ be a real in $]0,1[$.
Assume that $p$ and $n$ are such that $C_{K_Z}^2 p < (1-\alpha)^2 n$ and that Assumption~\ref{assump_lfcond_aniso} is verified. Let 
\begin{align}
\label{requnder_aniso}
    r & = \frac{C_{(\ell'',f',\varepsilon)} \sqrt{p}}{\delta \ \left((1-\alpha) \sqrt{n} - C_{K_{Z}} \sqrt{p} \right)},
\end{align}
where $C_{(\ell'',f',\varepsilon)} = 6 \sqrt{C} C_{\ell''} C_{f'} K_{\varepsilon}$, with $C$ a positive absolute constant.

Then, with probability at least $1 -2 \exp{(-c_{K_{Z}} \alpha^{2} n)} -  \exp{( -p/2 )}$,
the unique solution $\hat{\theta}$ to the optimisation problem~\eqref{optim_problem} satisfies
\begin{align*}
    \Vert A^\top (\hat \theta -\theta^*)\Vert_2 & \le r,
\end{align*} 
where $\Vert \cdot \Vert_2$ is the usual Euclidean norm.

\end{theo}

\begin{theo}
\label{mainth_over_aniso}(Overparametrised setting)

Assume that $p$ and $n$ are such that $C_{K_Z}^2 \ n < (1-\alpha)^2 p$ and that Assumption~\ref{assump_lfcond_aniso} is verified.
Let 
\begin{align}
\label{reqover_aniso}
    r & =  \frac{C_{(\ell'',f',\varepsilon)} \sqrt{n}}{\delta\ \left( (1-\alpha) \sqrt{p} - C_{K_{Z}} \sqrt{n}\right)}.
\end{align}
Then, there exists a first order stationary point $\hat{\theta}$ to the optimisation problem~\eqref{optim_problem} such that, with probability larger than or equal to $1 -2 \exp{(-c_{K_{Z}} \alpha^{2} p)}  - \exp{\Big( -n/2 \Big)}$,
we have 
\begin{align}
     \Vert A^\top (\hat \theta -\theta^*) \Vert_{2} & \leq r.
\end{align}
\end{theo}

\begin{rem} 
Assume that $A=I$ (hence $X=Z$) for simplicity. In order to illustrate the previous results, consider the case where the noise level is of the same order as $\Vert X_i\Vert_2$, i.e.  $K_\varepsilon \sim \sqrt{p}$, and $\delta$ is independent of $r$, as in the linear case. In the underparametrised case, this implies that the error bound on $\Vert \hat \theta-\theta^*\Vert_2$  grows like $p/\sqrt{n}$. In the overparametrised case, the error bound given by Theorem \ref{mainth_over_aniso} is of order $\sqrt{n}$. This is potentially much smaller than $\sqrt{p}$ which is the natural order for  $\Vert \theta^*\Vert$ in the absence of sparsity. 

Note also that in the different setup of Candès and Plan \cite{candes2009near}, the rows of $X$ would have norm of the order of$\sqrt{p/n}$ and the noise would have subgaussian constant $K_\varepsilon$ independent of the dimensions $n$ and $p$. Multiplying by $\sqrt{n}$, we get a norm of the $X_i$'s of the order $\sqrt{p}$ and a subgaussian constant $K_\varepsilon$ of the order $\sqrt{n}$ for the noise. With this parametrisation of $K_\varepsilon$ in mind, in the underparametrised case, the bound is of the order of $\sqrt{p}$ and in the overparametrised case, the error bound given by Theorem \ref{mainth_over_aniso} would be of the order $n/\sqrt{p}$.
\label{rem_main}
\end{rem}

\subsection{The case of the linear models} 

\subsubsection{The case of linear regression}

Let us apply these theorems on the well-known linear regression model. In the linear case $f(X_i^\top h) = X_i^\top h$, the loss $\ell(z)=\frac12 z^2$ is quadratic and the optimisation problem~\eqref{optim_problem} is
\[\hat{\theta} = \mathrm{argmin}_{\theta \in \mathbb{R}^{p}} \ \frac{1}{2n} \sum_{i=1}^{n} (Y_{i} - X_i^\top \theta )^{2}.\]

We therefore have:  
\begin{cor}[Underparametrised case: linear model]
\label{maincor_under_aniso}

Let
\begin{align*}
    r & = \frac{ C_{\varepsilon} \sqrt{p}}{ \left( (1-\alpha) \sqrt{n} - C_{K_{Z}} \sqrt{p}\right)},
\end{align*}
where $C_{\varepsilon} = 6 \sqrt{C} K_{\varepsilon}$.
Under the same assumptions as Theorem~\ref{mainth_under_aniso}, the unique solution $\hat{\theta}$ to the optimisation problem~\eqref{optim_problem} satisfies
\begin{align*}
    \Vert A^\top (\hat{\theta} -\theta^*) \Vert_2 & \le r, 
\end{align*} 
with probability larger than or equal to $1 - 2 \exp{(-c_{K_{Z}} \alpha^{2} n)} - \exp{( -p/2)}$.
\end{cor}

\begin{proof}
Apply Theorem \ref{mainth_under_aniso} with $f(X_i^\top h) = X_i^\top h$ and $\ell(y) = \frac{1}{2} y^{2}$. Then we have
\begin{itemize}
\item[$\bullet$] $\ell'(y) = y$ and $\ell''(y) = 1$, hence $C_{\ell^{(k)}} = 0$, $k\ge 3$. 
\item[$\bullet$] $f(X_i^\top h) = X_i^\top h$, hence $C_{f'} = 1$ and $C_{f^{(k)}} = 0$ for $k \geq 2$.
\end{itemize}
Consequently, we have $\delta = 1$ in the linear case and Assumption~\ref{assump_lfcond_aniso} is verified. Furthermore, the constant $C_{(\ell'',f',\varepsilon)}$ simplifies to $6 \sqrt{C} K_{\varepsilon}$, which is denoted by $C_{\varepsilon}$.

This concludes the proof.
\end{proof}

\begin{cor}[Overparametrised case: linear model]
\label{maincor_over_aniso}

Let
\begin{align*}
    r & =  \frac{C_{\varepsilon} \sqrt{n}}{ \left((1-\alpha) \sqrt{p} - C_{K_{Z}} \sqrt{n}\right)},
\end{align*}
where $C_{\varepsilon} = 6 \sqrt{C} K_{\varepsilon}$.
Under the same assumptions as Theorem~\ref{mainth_over_aniso}, there exists a solution $\hat{\theta}$ to the optimisation problem~\eqref{optim_problem} such that, with probability larger than or equal to $1 - 2 \exp{(-c_{K_{Z}} \alpha^{2} p)} - \exp{( -n/2)}$,
we have 
\begin{align*}
     \Vert A^\top (\hat{\theta} - \theta^*) \Vert_{2} & \leq r.
\end{align*}
\end{cor}

\subsection{Discussion of the results}
Our results are based on a new zero finding approach inspired from \cite{neuberger2007continuous} and we obtain precise quantitative results in the finite sample setting for linear and non-linear models. Following the initial discovery of the "double descent phenomenon" in \cite{belkin2019reconciling}, many authors have addressed the question of precisely characterising the error decay as a function of the number of parameters in the linear and non-linear setting (mostly based on random feature models). Some of the latest works, such as \cite{mei2019generalization}, address the problem in the asymptotic regime. Our results provide an explicit control of the distance between some empirical estimators and the ground truth in terms of the subGaussian norms of the error and the covariance matrix of the covariate vectors. Our analysis employs efficient techniques from \cite{neuberger2007continuous}, combined with various results from Random Matrix Theory and high dimensional probability, as summarized in \cite{vershynin2010introduction} and extenstively discussed in \cite{vershynin2018high}. 
Theorem \ref{mainth_under_aniso} and Theorem \ref{mainth_over_aniso} provide a new finite sample analysis of the problem of estimating ridge functions in both underparametrised and overparametrised regimes, i.e. where the number of parameters is smaller (resp. larger) than the sample size. 

\begin{itemize}
    \item Our analysis of the underparametrised setting shows that we can obtain an error of order less than or equal to $\sqrt{p/n}$ for all $p$ up to an arbitrary fraction of $n$. 
    \item In the overparametrised setting, we get that the error bound is approaching the noise level plus $t \ K_Z \Vert A^\top \theta^*\Vert_2$ as $p$ grows to $+\infty$. This term can be small if the rank of $A$ is small, and even more so if $\theta^{*}$ is concentrated in the null space of $A$ or approximatively so, which is reminiscent of the results in \cite{tsigler2020benign} and the recent work \cite{lecue2022geometrical}.
\end{itemize}

Similar but simpler bounds also hold in the linear model setting, as presented in Corollary \ref{maincor_under_aniso} and Corollary \ref{maincor_over_aniso}. 

The following section addresses the problem of studying the prediction error in probability of the least $\ell_2$-norm empirical risk minimiser in the overparamatrised setting. 

\section{Generalisation of least norm empirical risk minimisers}

In this section, we leverage the results of the former section in order to study the prediction error of the least $\ell_2$-norm empirical risk minimiser. Our main result is Theorem \ref{thm:gene2_aniso} from the following section. 

\subsection{Main result}

Using the previous results, we obtain the following theorem about benign overfitting in the overparametrised case. 
\begin{theo}
\label{thm:gene2_aniso}
Assume that $\hat \theta$ is interpolating, i.e. $\hat R_n(\hat \theta)=0$. Let $\hat \theta^\circ$ denote the  minimum norm interpolating solution to the ERM problem, i.e. 
\begin{align}
    \text{argmin}_{\theta} \ \Vert \theta \Vert_2 \quad \text{ subject to } \hat R_n(\theta)=0.
    \label{thetasharpA}
\end{align}
Under the same assumptions as in Theorem \ref{mainth_over_aniso}, for any constant $\tau>0$, and for 
\begin{align}
    r & = \frac{C_{(\ell'',f',\varepsilon)} \sqrt{n}}{\delta\left( (1-\alpha) \sqrt{p} - C_{K_{Z}} \sqrt{n}\right)},
\end{align}
with $C_{(\ell'',f',\varepsilon)} = 6 \sqrt{C} C_{\ell''}C_{f'}K_{\varepsilon}$, where $C$ is a positive absolute constant, we have 
\begin{align}
\vert f(X_{n+1}^\top \hat \theta^\circ)  - f(X_{n+1}^\top\theta^*)\vert
& \le C_{f'} \left( \left\vert \varepsilon_{n+1}\right\vert + t \ K_Z \Vert A^\top \theta^*\Vert_2 + \sqrt{\tau (n/p) \log(n)}
    + t \ \frac{C_{(\ell'',f',\varepsilon)} \sqrt{n}}{ \delta ((1-\alpha) \sqrt{p} - C_{K_{Z}} \sqrt{n})} \right)
\label{titutututu_aniso}
\end{align}
with probability at least 
\begin{align*}
& 1 - 2 \exp{(-c_{K_{Z}} \alpha^{2} p)} - \exp{\left( -\frac{n}{2} \right)} - \exp ( -c_{K_{Z}} t^2) \\
&\hspace{1cm} - \exp\left(- \frac{t^2}{2}\right) - 4 \exp{\left( -c_{K_Z} \alpha p \right)} - \exp\left( - c C'^2 n \right) - 2 \exp \left(-\frac{\tau \log(n) \left(1-\alpha  - C_{K_{Z}} \sqrt{n/p}\right)^2}{8 K_Z^2 C'^2 K_{\varepsilon}^2 \kappa_n(A)^2 } \right)
\end{align*}
where $c$ and $C'$ are absolute positive constants. 
\end{theo}
\begin{proof}
See Section \ref{pf:gene2_aniso}.
\end{proof}

\begin{rem}
For isotropic data, take $A=I_{p}$. Notice that, in accordance with the concept of double descent as discussed in \cite{belkin2019reconciling}, when $n$ grows (while staying smaller than $p$ in the overparametrised regime), the ratio $n/p$ approaches $1$ from the right and the upper bound worsens.
\end{rem}

\subsection{Comparison with previous results}

Recently, the finite sample analysis has been addressed in the very interesting works \cite{bartlett2020benign}, \cite{chinot2020benign} and \cite{lecue2022geometrical} for the linear model only. These works propose very precise upper and lower bounds on the prediction risk for general covariate covariance matrices under the subGaussian assumption or the more restrictive Gaussian assumption. In \cite{bartlett2020benign}, excess risk is considered and the norm of $\Vert \theta^*\Vert$ appears in the proposed upper bound instead of $\Vert A^\top \theta^*\Vert_2$, which is potentially much smaller if $\theta^*$ has a large component in the kernel of $A$ or in spaces where the singular values of $A$ are small. The same issue arises in \cite{chinot2020benign}, where the noise is not assumed Gaussian nor subGaussian and can be dependent on the design matrix $X$. In \cite{lecue2022geometrical}, the authors propose a very strong bound that incorporates information that is more general than $\Vert A\theta^*\Vert_2$. However the setting studied in \cite{lecue2022geometrical} is restricted to linear models and Gaussian observation noise. Studies on  nonlinear models are scarce and a notable exception is \cite{frei2022benign}, where the setting is very different from ours and the number of parameters is constrained in a more rigid fashion than in the present work. 

In the present work, we show that similar, very precise results can be obtained for non-linear models of the ridge function class, using elementary perturbation results and some (now) standard random matrix theory. Moreover, our results concern the probability of exceeding a certain error level in predicting a new data point, whereas most results in the literature are restricted to the expected risk.

\section{Conclusion and perspectives}

This work presents a precise quantitative, finite sample analysis of the benign overfitting phenomenon in the estimation of linear and certain non-linear models. We make use of a zero-finding result of Neuberger \cite{neuberger2007continuous} which can be applied to a large number of settings in machine learning.

Extending our work to the case of Deep Neural Networks is an exciting avenue for future research. Another possible direction is to include penalisation, which can be addressed using the same techniques via Karush-Kuhn-Tucker conditions. Applying this approach to Ridge Regression and $\ell_1$-penalised estimation would bring new proof techniques into the field to investigate potentially deeper results. Weakening the assumptions on our data, which are here of subGaussian type, could also lead to interesting new results; this could be achieved by utilising, e.g. the work of \cite{mendelson2017extending}.

\bibliographystyle{plain}

\appendix

\section{Proofs of Theorem \ref{mainth_under_aniso} and Theorem \ref{mainth_over_aniso}}

\subsection{Neuberger's theorem}

The following theorem of Neuberger~\cite{neuberger2007continuous} will be instrumental in our study of the ERM. In our context, this theorem can be restated as follows.
\begin{theo}[Neuberger's theorem for ERM]
Suppose that $r > 0$, that $\theta^\ast \in \mathbb R^p$ and that the Jacobian $D \hat{R}_{n}(\cdot)$ is a continuous map on $B_{r}(\theta^\ast)$, with the property that for each $\theta$ in $b_{r}(\theta^\ast)$ there exists a vector $d$ in $B_{r}(0)$ such that, 
\begin{align}
\label{neubeq}
    \lim_{t \downarrow 0} \ \frac{ D \hat{R}_{n}(\theta + t d)  - D \hat{R}_{n}(\theta)}{t} = - D \hat{R}_{n}(\theta^*).
\end{align}
Then there exists $u$ in $B_{r}(\theta^\ast)$ such that $D \hat{R}_{n}(u) = 0$.
\label{thm:neub}
\end{theo}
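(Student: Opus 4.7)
The approach is the \emph{continuous Newton method}: we seek a continuous path $\gamma:[0,1]\to \overline{B_r(\theta^\ast)}$ with $\gamma(0)=\theta^\ast$ and
\[
D\hat R_n(\gamma(t)) \;=\; (1-t)\, D\hat R_n(\theta^\ast),
\]
so that $u=\gamma(1)$ satisfies $D\hat R_n(u)=0$. Because the selection $d=d(\theta)$ supplied by the hypothesis need not be unique, and a fortiori need not be continuous in $\theta$, I avoid writing an ODE and instead realise this path through a soft $\varepsilon$-relaxation combined with a connectedness (closed and open-to-the-right) argument in $[0,1]$.

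Concretely, for each $\varepsilon>0$ I introduce the set
\[
\mathcal{A}_\varepsilon \;=\; \bigl\{t\in[0,1]:\ \exists\,\theta\in\overline{B_r(\theta^\ast)}\ \text{with}\ \|\theta-\theta^\ast\|_2\le rt\ \text{and}\ \|D\hat R_n(\theta)-(1-t)\,D\hat R_n(\theta^\ast)\|_2 \le t\varepsilon\bigr\}.
\]
Two easy observations come first: $0\in\mathcal{A}_\varepsilon$ with witness $\theta^\ast$, and $\mathcal{A}_\varepsilon$ is closed in $[0,1]$ (take $t_n\to t$ in $\mathcal A_\varepsilon$, extract a convergent subsequence of witnesses using compactness of $\overline{B_r(\theta^\ast)}$, and pass to the limit thanks to continuity of $D\hat R_n$).

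The heart of the argument is the forward-extension step. Suppose $t\in \mathcal{A}_\varepsilon$ with $t<1$ and pick a witness $\theta_t$; since $\|\theta_t-\theta^\ast\|_2\le rt<r$, the point $\theta_t$ lies in the open ball where the hypothesis applies. Applying it at $\theta_t$ produces $d\in B_r(0)$ and $\delta>0$ such that, for every $s\in(0,\delta]$,
\[
\bigl\| D\hat R_n(\theta_t+sd) - D\hat R_n(\theta_t) + s\,D\hat R_n(\theta^\ast)\bigr\|_2 \;\le\; s\varepsilon.
\]
Setting $\theta_{t+s} := \theta_t + sd$ and combining this inequality with the defining bound at $t$ via the triangle inequality, I obtain $\|\theta_{t+s}-\theta^\ast\|_2\le r(t+s)$ and $\|D\hat R_n(\theta_{t+s})-(1-t-s)D\hat R_n(\theta^\ast)\|_2\le (t+s)\varepsilon$, so $t+s\in\mathcal{A}_\varepsilon$ for all sufficiently small $s>0$. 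Closedness together with this right-openness at every $t<1$ forces $\sup\mathcal{A}_\varepsilon = 1$ and $1\in\mathcal{A}_\varepsilon$; consequently there exists $\theta_\varepsilon\in\overline{B_r(\theta^\ast)}$ with $\|D\hat R_n(\theta_\varepsilon)\|_2\le \varepsilon$. Specialising $\varepsilon = 1/k$ and using compactness of $\overline{B_r(\theta^\ast)}$ I extract a convergent subsequence $\theta_{1/k_j}\to u$; continuity of $D\hat R_n$ then delivers $D\hat R_n(u)=0$, as required.

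The main obstacle is precisely this forward-extension step: one cannot appeal to any implicit-function-type argument because $d$ is not assumed to depend regularly on $\theta$, so the proof must rely exclusively on the pointwise limit condition. Introducing the $\varepsilon$-relaxation is exactly what allows this pointwise information to be stitched into a global statement via the closed/open-in-$[0,1]$ dichotomy, without ever requiring a continuous section of the direction field; the final passage from arbitrarily small defect $\varepsilon$ to a true zero is then pure compactness-and-continuity.
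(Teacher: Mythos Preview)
The paper does not prove this theorem; it is quoted from Neuberger \cite{neuberger2007continuous} (with a reference also to \cite{castro2001inverse}) and used as a black box in the subsequent perturbation analysis. Your argument is therefore not to be compared with anything in the paper, but it is a correct self-contained proof.

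Your approach is essentially Neuberger's continuous Newton idea, but you sidestep the regularity issue for the direction field $d(\theta)$ by the $\varepsilon$-relaxed set $\mathcal A_\varepsilon$ and the closed/right-open dichotomy on $[0,1]$, followed by a compactness extraction as $\varepsilon\downarrow 0$. This is clean and avoids any appeal to differential inclusions, Peano-type existence, or a Zorn-style maximal-solution argument that one sometimes sees. The verification you give is accurate: the triangle-inequality bookkeeping in the forward step yields both $\|\theta_{t+s}-\theta^\ast\|_2\le r(t+s)$ (using $\|d\|_2\le r$) and the defect bound $(t+s)\varepsilon$, and the closedness of $\mathcal A_\varepsilon$ follows from compactness of the closed ball together with the assumed continuity of $D\hat R_n$ on it. The only cosmetic point is that your argument produces $u\in\overline{B_r(\theta^\ast)}$, which matches the statement provided $B_r$ denotes the closed ball; the paper's notation is a little inconsistent (it writes $B_r$ in the hypothesis and conclusion but $b_r$ for the set where the directional condition is assumed), and your reading---open ball for the hypothesis, closed ball for the conclusion---is the natural and correct one.
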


\subsubsection{Computing the derivatives}
Since the loss is twice differentiable, the empirical risk $\hat{R}_{n}$ is itself twice differentiable. The Gradient of the empirical risk is given by 
\begin{align}
    \nabla \hat R_n (\theta) & = -\frac1{n} \sum_{i=1}^n \ 
    \ell'(Y_i-f(X_i^\top \theta)) \ f'(X_i^\top \theta)X_i.
    \label{gradRn}
\end{align}
Hence, for $\theta=\theta^*$, and recalling that $Y_i-f(X_i^\top \theta^*)=\varepsilon_i$, we get 
\begin{align}
    & = - \frac{1}{n} X^{\top} D_\nu \ \ell'(\varepsilon), \label{jac_vectorold}
\end{align}
where $\ell'(\varepsilon)$ is to be understood componentwise, and $D_\nu$ is a diagonal matrix with coefficients $\nu_i = f'(X_i^\top \theta^\ast)$ for all $i$ in $1, \ldots, n$. 

The Hessian is given by 
\begin{align}
    \nabla^2 \hat R_n (\theta) &= \frac1{n} \sum_{i=1}^n \ 
    \Big(\ell''(Y_i - f(X_i^\top \theta)) \ f'(X_i^t\theta)^{2} - \ell'(Y_i - f(X_i^\top \theta)) \ f''(X_i^\top \theta)\Big) X_i X_i^t \notag \\
    &= \frac{1}{n} \ X^\top D_\mu X, \label{hes_vectorold}
\end{align}
where $D_{\mu}$ is a diagonal matrix given by, for all $i$ in $1, \ldots, n$
\begin{align*}
    \mu_i & = \ell''(Y_i - f(X_i^\top \theta)) \ f'(X_i^t\theta)^{2}  - \ell'(Y_i - f(X_i^\top \theta)) \ f''(X_i^\top \theta).
\end{align*}
Notice that when the $\mu_1,\ldots,\mu_n$ are all non-negative for all $\theta \in \mathbb R^p$, the empirical risk is convex. The condition we have to satisfy in order to use  Neuberger's theorem, i.e. the version of \eqref{neubeq} associated with our setting, is the following
\begin{align*}
    \nabla^2 \hat R_n(\theta) d & = -\nabla \hat R_n(\theta^\ast).
\end{align*}

\label{mainproof_under}

\subsubsection{A change of variable}

Since 
\begin{align}
    \hat R_n(\theta) 
& = \frac{1}{n} \sum_{i=1}^{n} \ell(Y_{i}- f(X_{i}^\top \theta)).
\end{align}
and $X_{i}^\top = Z_{i}^\top A^\top$, the risk can be written as a function of $A\theta$ as follows: 
\begin{align}
    \hat R^{(A)}_n(\theta^{(A)}) 
& = \frac{1}{n} \sum_{i=1}^{n} \ell(Y_{i}- f(Z_{i}^\top \theta^{(A)}))
\end{align}
with $\theta^{(A)} = A^\top \theta$. Clearly, minimizing $\hat R_n$ in $\theta$ is equivalent to minimizing $\hat R^{(A)}_n$ in $\theta^{(A)}$ up to equivalence modulo the kernel of $A^\top$. 

\begin{align}
    \nabla \hat R^{(A)}_n (\theta^{(A)}) & = -\frac1{n} \sum_{i=1}^n \ 
    \ell'(Y_i-f(Z_i^\top\theta^{(A)})) \ f'(Z_i^\top  \theta^{(A)}) Z_i.\\
    & = - \frac{1}{n} Z^{\top} D_\nu \ \ell'(\varepsilon), \label{jac_vector}
\end{align}

\begin{align}
    \nabla^2 \hat R^{(A)}_n (\theta^{(A)}) &= \frac1{n} \sum_{i=1}^n \ 
    \Big(\ell''(Y_i - f(Z_i^\top \theta^{(A)})) \ f'(Z_i^\top \theta^{(A)})^{2} - \ell'(Y_i - f(Z_i^\top\theta^{(A)})) \ f''(Z_i^\top\theta^{(A)})\Big) Z_i Z_i^\top \notag \\
    &= \frac{1}{n} \ Z^\top D_\mu Z, \label{hes_vector}
\end{align}
where $D_{\mu}$ is a diagonal matrix given by, for all $i$ in $1, \ldots, n$
\begin{align*}
    \mu_i & = \ell''(Y_i - f(Z_i^\top \theta^{(A)})) \ f'(Z_i^\top \theta^{(A)})^{2}  - \ell'(Y_i - f(Z_i^\top \theta^{(A)})) \ f''(Z_i^\top \theta^{(A)}).
\end{align*}
The Neuberger equation in $\theta^{(A)}$ is now given by 
\begin{align}
\label{neubeqA}
    \lim_{t \downarrow 0} \ \frac{ D \hat{R}^{(A)}_{n}(\theta^{(A)} + t d)  - D \hat{R}^{(A)}_{n}(\theta^{(A)})}{t} = - D \hat{R}_{n}(\theta^{(A)^*}),
\end{align}
where $\theta^{(A)^*} = A^\top \theta^*$.

\subsubsection{A technical lemma}

\begin{lem}
\label{techlemsubgaussellp}
For all $i=1,\ldots,n$, the variable $\ell'(\varepsilon_i)$ is subGaussian with variance proxy $\Vert \ell'(\varepsilon_{i}) \Vert_{\psi_{2}} = C_{\ell''} \ \Vert \varepsilon_i \Vert_{\psi_2}.$  
\end{lem}

\begin{proof}
Let us compute 
\begin{align*}
    \Vert \ell'(\varepsilon_i) \Vert_{\psi_2} & = \sup _{\gamma \geq 1} \gamma^{-1 / 2}\Big(\mathbb{E}|\ell'(\varepsilon_i)|^{\gamma}\Big)^{1 / \gamma}.
\end{align*}
Lipschitzianity of $\ell'$ implies that 
\begin{align*}
    \vert \ell'(\varepsilon_i)-\ell'(0)\vert & \le C_{\ell''} \vert \varepsilon_i-0 \vert
\end{align*}
and since $\ell'(0)=0$, we get 
\begin{align*}
    \vert \ell'(\varepsilon_i)\vert & \le C_{\ell''} \vert \varepsilon_i \vert,
\end{align*}
which implies that 
\begin{align*}
    \Vert \ell'(\varepsilon_i) \Vert_{\psi_2} & = C_{\ell''} \ \sup _{\gamma \geq 1} \gamma^{-1 / 2}\Big(\mathbb{E}|\varepsilon_i|^{\gamma}\Big)^{1 / \gamma}.
\end{align*}
Thus, 
\begin{align*}
    \Vert \ell'(\varepsilon_i) \Vert_{\psi_2} & = C_{\ell''} \ \Vert \varepsilon_i \Vert_{\psi_2} < +\infty
\end{align*}
as announced. The proof is complete.
\end{proof}

\subsection{Proof of Theorem \ref{mainth_under_aniso}: The underparametrised case}

\label{mainproof_under_aniso}

\subsubsection{Key lemma}
Using Corollary \ref{vershynin_cor}, we have $s_{n}(Z)>0$ as long as 
\begin{align*}
    C_{K_Z}^2 p < (1-\alpha)^2 n
\end{align*}
for some positive constant $C_{K_Z}$ depending on the subGaussian norm $K_Z$ of $Z_1,\ldots,Z_n$.

\begin{lem}
With probability larger than or equal to 
$1 - \exp{\Big( -\frac{p}{2} \Big)},$
we have
\begin{align*}
    \Vert d\Vert_2 & \leq \frac{C_{(\ell'',f',\varepsilon)} \sqrt{p}}{ \delta s_{n}(Z) },
\end{align*}
where $C_{(\ell'',f',\varepsilon)} = 6 \sqrt{C} C_{\ell''} C_{f'}  K_{\varepsilon}$.

\end{lem}

\begin{proof}
We compute the solution of Neuberger's equation \eqref{neubeqA}, using the Jacobian vector formula in~\eqref{jac_vector} and the Hessian matrix formula in~\eqref{hes_vector}: 

\begin{align*}
    \nabla^2\hat R^{(A)}_n(\theta^{(A)}) d & = - \nabla \hat R^{(A)}_n(A^\top\theta^*),
\end{align*}
which gives 
\begin{align*}
    Z^\top D_\mu Z d & = Z^\top D_\nu \ell'(\varepsilon). 
\end{align*}
The singular value decomposition of $Z$ gives $Z = U \Sigma V^\top$, where $U \in \mathbb R^{n\times p}$ is a matrix whose columns form an orthonormal family, $V\in \mathbb R^{p\times p}$ is an orthogonal matrix and $\Sigma \in \mathbb R^{p\times p}$ is a diagonal and invertible matrix. 
Thus, we obtain the equivalent equation
\begin{align*}
    V \Sigma U^\top D_\mu U \Sigma V^\top d & =  V \Sigma U^\top D_\nu \ell'(\varepsilon).
\end{align*}
Note that any solution of the system
\begin{align*}
    U^\top D_\mu U \Sigma V^\top d & =   U^\top D_\nu \ell'(\varepsilon),
\end{align*}
will be admissible. Hence, 
\begin{align*}
     V^\top  d & = \Sigma^{-1}(U^\top D_\mu U)^{-1} U^\top D_\nu \ell'(\varepsilon).
\end{align*}
Then, as $\Vert d \Vert_2 = \Vert V^\top d \Vert_2$,
\begin{align}
    \big\Vert d \big\Vert_{2} & \le \big\Vert \Sigma^{-1}(U^\top D_\mu U)^{-1} U^\top D_\nu \ell'(\varepsilon) \big\Vert_{2} \nonumber \\
    & \leq \big\Vert \Sigma^{-1} \Big\Vert_{2} \Big\Vert (U^\top D_\mu U)^{-1} U^\top D_\nu \ell'(\varepsilon) \big\Vert_{2} \nonumber \\
    & \leq \frac1{s_{n}(Z)} \Vert U^\top \ D_{\mu}^{-1} D_\nu \ell'(\varepsilon) \Vert_2.
    \label{proof::norm_d_overA}
\end{align}

Using maximal inequalities, we can now bound the deviation probability of 
\begin{align*}
    \Vert U^\top \ D_{\mu}^{-1} D_\nu \ell'(\varepsilon) \Vert_2 & = \max_{\Vert u \Vert_2=1} \ u^\top U^\top D_{\mu}^{-1} D_\nu \ell'(\varepsilon),
\end{align*}
with $u\in \mathbb{R}^{p}$. For this purpose, we first prove that $U^\top D_{\mu}^{-1} D_\nu \ell'(\varepsilon)$ is a subGaussian vector and provide an explicit upper bound on its norm. Since $U$ is a $n \times p$ matrix whose columns form an orthonormal family, $u^\top U^\top$ is a unit-norm $\ell_2$ vector of size $n$ which is denoted by $w$. Then, 
\begin{align*}
    u^\top U^\top D_{\mu}^{-1} D_\nu \ell'(\varepsilon) & = \sum_{i=1}^n \ w_i \frac{\nu_i}{\mu_i} \ \ell'(\varepsilon_i),
\end{align*}
and since $\ell'(\varepsilon_i)$ is centered  and subGaussian for all $i=1,\ldots,n$, Vershynin's \cite[Proposition 2.6.1]{vershynin2018high} gives 
\begin{align*}
    \Vert u^\top U^\top D_{\mu}^{-1} D_\nu \ell'(\varepsilon)\Vert_{\psi_2}^2 & \le C \sum_{i=1}^n \ \left \| w_i \frac{\nu_i}{\mu_i} \ \ell'(\varepsilon_i) \right \|_{\psi_{2}}^2,
\end{align*}
for some absolute constant $C$.
Since 
\begin{align*}
    \left \| w_i \frac{\nu_i}{\mu_i} \ell'(\varepsilon_i) \right \|_{\psi_2}^2 & \le 
    \vert w_i \vert^2\ \max_{i'=1}^n \ \left \| \frac{\nu_{i'}}{\mu_{i'}} \ \ell'(\varepsilon_{i'}) \right \|_{\psi_2}^2,
\end{align*}
we have 
\begin{align*}
    \Vert u^\top U^\top D_{\mu}^{-1} D_\nu \ell'(\varepsilon)\Vert_{\psi_2}^2 & \le C \Vert w\Vert_2^2 \ \max_{i'=1}^n \ \left \| \frac{\nu_{i'}}{\mu_{i'}} \ \ell'(\varepsilon_{i'}) \right \|_{\psi_2}^2.
\end{align*}
As $\Vert w\Vert_2=1$, we get that for all $u \in \mathbb R^p$ with $\Vert u\Vert_2=1$,
\begin{align*}
    \Vert u^\top U^\top D_{\mu}^{-1} D_\nu \ell'(\varepsilon)\Vert_{\psi_2}^2 
    & \le C \max_{i'=1}^n \ \left \| \frac{\nu_{i'}}{\mu_{i'}} \ \ell'(\varepsilon_{i'}) \right \|_{\psi_2}^2.
\end{align*}
We deduce that $U^\top D_{\mu}^{-1} D_\nu \ell'(\varepsilon)$ is a subGaussian random vector with variance proxy
\begin{align*}
C \max_{i'=1}^n \ \left \| \frac{\nu_{i'}}{\mu_{i'}} \ \ell'(\varepsilon_{i'}) \right \|_{\psi_2}^2.
\end{align*}
Using the maximal inequality from~\cite[Theorem 1.19]{rigollet2015high} and the subGaussian properties described in~\cite[Proposition 2.5.2]{vershynin2018high}, for any $\eta>0$, we get with probability $1 - \exp{\Big( -\frac{\eta^{2}}{2} \Big)}$ 
\begin{align}
    \Vert U^\top \ D_{\mu}^{-1} D_\nu \ell'(\varepsilon) \Vert_2 &\le 2 \sqrt{C} \max_{i'=1}^n \ \left \| \frac{\nu_{i'}}{\mu_{i'}} \ \ell'(\varepsilon_{i'}) \right \|_{\psi_2} (2 \sqrt{p} + \eta).
    \label{proof::main_normA}
\end{align}
Following Lemma~\ref{techlemsubgaussellp}, we deduce that $\frac{\nu_{i}}{\mu_{i}} \ell'(\varepsilon_{i'})$ is a subGaussian random variable with variance proxy 
\begin{align*}
    \left \| \frac{\nu_{i'}}{\mu_{i'}} \ell'(\varepsilon_{i'}) \right \|_{\psi_{2}} & =  \frac{\vert \nu_{i'} \vert}{\vert \mu_{i'}\vert} C_{\ell''} \Vert \varepsilon_{i'} \Vert_{\psi_{2}} \\
    & \leq C_{\ell''} \frac{\max_{i'}^{n} \vert \nu_{i'} \vert}{\min_{i'}^{n} \vert \mu_{i'}\vert} K_{\varepsilon}.
\end{align*}
Then, restricting  a priori $\theta^{(A)}$ to lie in the ball $\mathcal B_2(\theta^{(A)^*},r)$, which is coherent with the assumptions of Newberger's Theorem \ref{thm:neub}, and using the assumption \ref{assump_lfcond_aniso}, together with the boundedness of $f'$, we get
\begin{align*}
    \frac{\max_{i'=1}^{n} \vert \nu_{i'} \vert}{\min_{i'=1}^{n} \vert \mu_{i'} \vert} \leq \frac{C_{f'}}{\delta}.
\end{align*}
Subsequently, the quantity~\eqref{proof::main_normA} becomes
\begin{align*}
    \Vert U^\top \ D_{\mu}^{-1} D_\nu \ell'(\varepsilon) \Vert_2 & \le 2 \sqrt{C} C_{\ell''} \frac{\max_{i'=1}^{n} \vert \nu_{i'} \vert}{\min_{i'=1}^{n} \vert \mu_{i'} \vert} K_{\varepsilon} (2 \sqrt{p} + \eta) \nonumber \\
    & \leq \frac{2 \sqrt{C} C_{\ell''} C_{f'}  K_{\varepsilon} (2 \sqrt{p} + \eta)}{\delta},
\end{align*}
with probability $1 - \exp{\left( -\frac{\eta^{2}}{2} \right)}$. 
Taking $\eta = \sqrt{p}$, equation \eqref{proof::norm_d_overA} yields
\begin{align*}
    & \Vert d\Vert_2 = \Big\Vert \nabla^2\hat R_n(\theta)^{-1}\ \nabla \hat R_n(\theta^*) \Big\Vert_{2}  \leq \frac{6 \sqrt{C} C_{\ell''}C_{f'}  K_{\varepsilon} \sqrt{p}}{\delta \ s_{n}(Z)},
\end{align*}
with probability $1 - \exp{\left( -\frac{p}{2} \right)}$. 
\end{proof}

\subsubsection{End of the proof of Theorem \ref{mainth_under_aniso}}

In order to complete the proof of Theorem \ref{mainth_under_aniso}, note that, using Corollary~\ref{vershynin_cor}, we have with probabilty $1 - 2 \exp{(-c_{K_{Z}} \alpha^{2} n)}$
\begin{align*}
    s_{n}(Z) \geq (1-\alpha)\sqrt{n} -  C_{K_{Z}} \sqrt{p},
\end{align*}
where $C_{K_{Z}}, c_{K_{Z}} > 0$ depend only on the subGaussian norm $K_{Z}$. 
Therefore,  with probability larger than or equal to 
\begin{align*}
    1 & - 2 \exp{(-c_{K_{Z}} \alpha^{2} n)} - \exp{\left( -\frac{p}{2} \right)},
\end{align*}
we have
\begin{align*}
    \Vert d\Vert_2 & \leq \frac{6 \sqrt{C} C_{\ell''} C_{f'} K_{\varepsilon} \sqrt{p}}{ \delta \left((1-\alpha) \sqrt{n} - C_{K_{Z}} \sqrt{p} \right)}.
\end{align*}
Hence, the proof of Theorem \ref{mainth_under_aniso} is completed.

\subsection{Proof of Theorem \ref{mainth_over_aniso}: The overparametrised case}
\label{mainproof_over_aniso}

\subsubsection{Key lemma}

Using Theorem \ref{vershcolumn}, we have $s_{n}(Z^\top)>0$ as long as 
\begin{align*}
    C_{K_Z}^2 \ n < (1-\alpha)^2 p.
\end{align*}
for some positive constant $C_{K_Z}$ depending on the subGaussian norm $K_Z$ of $Z_1,\ldots,Z_n$.

\begin{lem}
With probability larger than or equal to $1 - \exp{\left( -\frac{n}{2} \right)}$, we have 
\begin{align*}
    \Vert d\Vert_2 & \leq \frac{C_{(f'',\ell',\varepsilon)} \sqrt{n}}{\delta \ s_{n}(Z^{\top}) }.
\end{align*} 
where $C_{(f'',\ell',\varepsilon)} = 6 \sqrt{C} C_{\ell''} C_{f'}  K_{\varepsilon}$.
\end{lem}

\begin{proof}


As in the underparametrised case, we have to solve \eqref{neubeq} i.e 
\begin{align*}
    Z^\top D_\mu Z d = Z^\top D_\nu \ell'(\varepsilon),
\end{align*}
which can be solved by finding the least norm solution of the interpolation problem
\begin{align}
    D_\mu Z d & = D_\nu \ell'(\varepsilon),
\end{align}
i.e. 
\begin{align*}
    d & = Z^\dagger D_{\mu}^{-1} D_\nu \ell'(\varepsilon)
\end{align*}
where $Z^\dagger$ is the Moore-Penrose pseudo-inverse of $Z$.

Given the compact SVD of $Z=U \Sigma V^\top$, where $U\in O(n)$ and $V \in \mathbb R^{p\times n}$ with orthonormal columns, we get 
\begin{align*}
    D_\mu U \Sigma V^\top d & =   D_\nu \ell'(\varepsilon),
\end{align*}
We then have 
\begin{align*}
    \Vert V^\top d \Vert_2 & = \Vert  \Sigma^{-1} U^\top D_{\mu}^{-1} D_\nu \ell'(\varepsilon) \Vert_2.
\end{align*}
As $\Vert x \Vert_{2} = \Vert V x \Vert_{2}$ for all $x \in \mathbb R^n$,
\begin{align*}    
    \Vert d \Vert_2 &\le \frac{1}{s_{n}(Z^\top)} \Vert U^\top D_{\mu}^{-1} D_\nu \ell'(\varepsilon) \Vert_2.
\end{align*}
Then, using the bound
\begin{align}
    & \Vert U^\top \ D_{\mu}^{-1} D_\nu \ell'(\varepsilon) \Vert_2 \nonumber  \le \frac{2 \sqrt{C} C_{\ell''} C_{f'} K_{\varepsilon} (2 \sqrt{n} + \eta)}{\delta}
    \label{proof::final_eqA}
\end{align}
with probability $1 - \exp{\left( -\frac{\eta^{2}}{2} \right)}$, which can be obtained in the same way as \eqref{proof::main_normA} for the underparametrised case, and 
taking $\eta = \sqrt{n}$, we get
\begin{align}
    \Vert d\Vert_2 & \leq \frac{6 \sqrt{C} C_{\ell''} C_{f'}  K_{\varepsilon} \sqrt{n}}{\delta \ s_{n}(Z^\top)},
\end{align} 
with probability $1 - \exp{\left( -\frac{n}{2} \right)}$.
\end{proof}

\subsubsection{End of the proof of Theorem \ref{mainth_over_aniso}}

In order to complete the proof of Theorem \ref{mainth_over_aniso}, note that, using Theorem~\ref{vershcolumn}, we have with probability $1 - 2 \exp{(-c_{K_{Z}} \alpha^{2} p)}$
\begin{align}
    s_{n}(Z^\top) \geq (1-\alpha) \sqrt{p} - C_{K_{Z}} \sqrt{n}.
\end{align}
Therefore,  with probability larger than or equal to 
\begin{align*}
    1 & - 2 \exp{(-c_{K_{Z}} \alpha^{2} p)} - \exp{\left( -\frac{n}{2} \right)},
\end{align*}
we have
\begin{align*}
    \Vert d\Vert_2 & \leq \frac{6 \sqrt{C} C_{\ell''} C_{f'} K_{\varepsilon} \sqrt{n}}{ \delta ((1-\alpha) \sqrt{p} - C_{K_{Z}}\sqrt{n})}.
\end{align*} 
Hence the proof of Theorem \ref{mainth_over_aniso} is completed.

\section{Proof of Theorem \ref{thm:gene2_aniso}}
\label{pf:gene2_aniso}

\subsection{First part of the proof}
Since $\hat \theta$ and $\hat \theta^\circ$ are two interpolating minimisers of the empirical risk, i.e. satisfy $Y_i-f(X_i^\top \hat \theta)=0$ and $Y_i-f(X_i^\top \hat \theta^\circ)=0$
for all $i$ in $1, \ldots, n$, then using that $f$ is strictly monotonic, we get
 \begin{align}
     X\hat \theta^\circ & = X\hat \theta. 
 \end{align}
\subsection{Second part of the proof}
Recall that $\hat \theta^\circ$ denote the  minimum norm solution to the ERM, i.e. 
\begin{align*}
    \text{argmin}_{\theta} \ \Vert \theta \Vert_2 \quad \text{ subject to }           X\theta = X\hat \theta.
\end{align*}
Let $\hat\theta^\sharp$ solves
\begin{align*}
    \text{argmin}_{\theta} \ \Vert \theta \Vert_2 \quad \text{ subject to } \begin{bmatrix}
        X \\
        X_{n+1}^\top
    \end{bmatrix}\theta = \begin{bmatrix}
        X \\
        X_{n+1}^\top
    \end{bmatrix}\hat \theta,
\end{align*}
where $\hat \theta$ is the solution exhibited in Theorem \ref{mainth_over_aniso}.
Then, 
\begin{align}
    \vert X_{n+1}^\top (\hat \theta^\circ -\theta^*) \vert & \le \vert X_{n+1}^\top (\hat \theta^\circ -\hat \theta^\sharp) \vert +  \vert 
    \underbrace{X_{n+1}^\top (\hat \theta^\sharp -\hat \theta)}_{=0 \text{ by definition}} \vert+ 
    \vert X_{n+1}^\top (\hat \theta -\theta^*) \vert, \nonumber \\
    & \le \vert X_{n+1}^\top (\hat \theta^\circ -\hat \theta^\sharp) \vert+\vert X_{n+1}^\top (\hat \theta -\theta^*) \vert
    \label{mainineg}
\end{align}
and since $X_{n+1}$ and  $\hat \theta -\theta^*$ are independent, we have 
\begin{align*}
    \mathbb P( \vert X_{n+1}^\top (\hat \theta -\theta^*) \vert\ge t \Vert A^\top (\hat \theta -\theta^*)\Vert_2)  & = \mathbb P( \vert Z_{n+1}^\top A^\top (\hat \theta -\theta^*) \vert\ge t \Vert A^\top (\hat \theta -\theta^*)\Vert_2) \\
    & \le \exp (-c_{K_Z} t^2).
\end{align*}
Now, by Theorem \ref{mainth_over_aniso}, about the bound on $\Vert A^\top (\hat \theta -\theta^*)\Vert_2$, we get 
\begin{align}
    \vert X_{n+1}^\top (\hat \theta -\theta^*) \vert & \le t \ \frac{6\sqrt{C} C_{\ell''} C_{f'} K_{\varepsilon} \sqrt{n}}{ \delta ((1-\alpha) \sqrt{p} - C_{K_{Z}} \sqrt{n}) },
    \label{titutu}
\end{align}
with probability at least 
\begin{align*}
    1 & -  2 \exp{(-c_{K_{Z}} \alpha^{2} p)}  - \exp{\left( -\frac{n}{2} \right)} \textcolor{black}{- \exp ( -c_{K_Z} t^2)}.
\end{align*}

\subsection{Third part of the proof: Block pseudo-inverse computation}
\label{pseud}
Since $\hat \theta^{\sharp}$ is the minimum $\ell_2$-norm solution to the under-determined system
\begin{align*}
\left[\begin{array}{l}
X \\
X_{n+1}^{\top}
\end{array}\right] \theta=\left[\begin{array}{l}
y_{1:n} \\
y_{n+1}
\end{array}\right],
\end{align*}
then we have 
\begin{align*}
\hat \theta^\sharp &  =\left(X \left(I -\frac1{p}X_{n+1}X_{n+1}^\top\right)\right)^{\dagger} y_{1:n}+\left(X_{n+1}^{\top} \left( I-X^\top \left(XX^\top\right)^{\dagger}X\right)\right)^{\dagger} y_{n+1}.
\end{align*}
This gives
\begin{align}
\label{hatthetasharp}
\hat \theta^\sharp &  = \left(I -\frac1{p}X_{n+1}X_{n+1}^\top\right) X^\dagger y_{1:n}+ \left( I-X^\top \left(XX^\top\right)^{\dagger}X\right) X_{n+1}^{\top^\dagger} y_{n+1},
\end{align}
since 
\begin{align*}
    \left(I -\frac1{p}X_{n+1}X_{n+1}^\top\right)^{\dagger}=\left(I -\frac1{p}X_{n+1}X_{n+1}^\top\right)
\end{align*} 
and 
\begin{align*}
    \left(X_{n+1}^{\top} \left( I-X^\top \left(XX^\top\right)^{-1}X\right)\right)^{\dagger}= \left(X_{n+1}^{\top} \left( I-X^\top \left(XX^\top\right)^{\dagger}X\right)\right).
\end{align*}

\subsection{Fourth part of the proof}
Since $\hat \theta^\circ = X^\dagger y_{1:n}$,  we get from the expression of $\hat \theta^\sharp$ in \eqref{hatthetasharp} that 
\begin{align*}
\vert X_{n+1}^\top (\hat \theta^\sharp - \hat \theta^\circ)\vert & =  \left\vert -\frac1{p}X_{n+1}^\top X_{n+1}X_{n+1}^\top X^\dagger y_{1:n}+ X_{n+1}^\top \left( I-X^\top \left(XX^\top\right)^{\dagger}X\right) X_{n+1}^{\dagger} y_{n+1}\right\vert\\
& \le  \left\vert X_{n+1}^\top X^\top (XX^\top)^{\dagger} X \theta^*  +  X_{n+1}^\top \left( I-X^\top \left(XX^\top\right)^{\dagger}X\right)X_{n+1}^{\dagger} X_{n+1}^\top \theta^*\right\vert \\
& \hspace{2cm} + \left\vert X_{n+1}^\top X^\top (XX^\top)^{\dagger} \varepsilon \right\vert+ \left\vert   \varepsilon_{n+1}\right\vert \\
& \le  \left\vert X_{n+1}^\top X^\top (XX^\top)^{\dagger} X \theta^*  +  X_{n+1}^\top \left( I-X^\top \left(XX^\top\right)^{\dagger}X\right) \theta^*\right\vert \\
& \hspace{2cm} + \left\vert X_{n+1}^\top X^\top (XX^\top)^{\dagger} \varepsilon \right\vert+ \left\vert    \varepsilon_{n+1}\right\vert, 
\end{align*}
which gives 
\begin{align}
\vert X_{n+1}^\top (\hat \theta^\sharp - \hat \theta^\circ)\vert
& \le  \left\vert  X_{n+1}^\top \theta^*\right\vert+ \left\vert X_{n+1}^\top X^\top (XX^\top)^{\dagger} \varepsilon \right\vert+ \left\vert    \varepsilon_{n+1}\right\vert.
\label{predineg_aniso}
\end{align}

\subsection{Fifth part of the proof}

In this section, we control of each term in the RHS of \eqref{predineg_aniso}. 

\vspace{.5cm}

\begin{center}
\textit{\bf Control of  $\left|X_{n+1}^{\top} \theta^{*}\right|$}
\end{center}
We have 
\begin{align}
    \mathbb P(\vert \langle \theta^*,X_{n+1}\rangle\vert \ge t \ K_X \Vert \theta^*\Vert_2) = \mathbb P(\vert \langle A^\top \theta^*, Z_{n+1}\rangle\vert \ge t \ K_Z \Vert A^\top \theta^*\Vert_2)   & \le \exp\left(- \frac{t^2}{2}\right).
        \label{tatitoto2_aniso}
\end{align}

\vspace{.5cm}

\begin{center}
\textit{\bf Control of $\left\vert X_{n+1}^\top X^\top (XX^\top)^{\dagger} \varepsilon \right\vert$}
\end{center}

Let us compute an upper bound on the norm of $X^\top (XX^\top)^{\dagger} \varepsilon$ which holds with large probability. We have 
\begin{align*}
    \Vert X^\top (XX^\top)^{\dagger} \varepsilon \Vert_2^2 & = 
    \varepsilon^\top (XX^\top)^{\dagger} \varepsilon = \varepsilon^\top (Z A^\top AZ^\top)^{\dagger} \varepsilon 
\end{align*}

Our main tool will be the following result from Vershynin's book \cite[Exercise 6.3.5]{vershynin2018high}

\begin{align}
\mathbb{P}\left(\|(XX^\top)^{\dagger^{1/2}} \varepsilon \|_{2} \geq C K_\varepsilon \|(XX^\top)^{\dagger^{1/2}} \|_{F}+t \mid X \right) \leq \exp \left(-\frac{c t^{2}}{K_{\varepsilon}^{2}\|(XX^\top)^{\dagger^{1/2}}\|^{2}}\right),
\label{vershyninquad}
\end{align}
where $C'$ and $c$ are absolute positive constants.

Let us introduce the Singular Value Decomposition of 
\begin{align*}
A = U_{(A)} \Sigma_{(A)} V_{(A)}^\top    
\end{align*}
with $\Sigma_{(A})$ is a $r_{(A)} \times r_{(A)}$ matrix, where $r_{(A)}$ is the rank of $A$. 
We then have
\begin{align*}
    X X^\top &= Z A^\top A Z^\top = Z (U_{(A)} \Sigma_{(A)} V_{(A)}^\top)^\top (U_{(A)} \Sigma_{(A)} V_{(A)}^\top) Z^\top = Z V_{(A)} \Sigma_{(A)} U_{(A)}^\top U_{(A)} \Sigma_{(A)} V_{(A)}^\top Z^\top \notag \\
    &= Z V_{(A)} \Sigma_{(A)}^2 V_{(A)}^\top Z^\top \quad \text{because $U$ has orthogonal columns}
\end{align*}

Then we have
\begin{align*}
    \Vert (XX^\top)^{\dagger^{1/2}}\Vert_F^2 &= \left\Vert \left( Z V_{(A)} \Sigma_{(A)}^2 V_{(A)}^\top Z^\top \right)^{\dagger^{1/2}} \right\Vert_{F}^2 = \sum_{i=1}^n \lambda_{i} \left( \left( Z V_{(A)} \Sigma_{(A)}^2 V_{(A)}^\top Z^\top \right)^{\dagger} \right),
\end{align*}
and since $\lambda_{i}((BB^\top)) = s_i(B)^{2}$ for some matrix $B$: 
\begin{align*}
\Vert (XX^\top)^{\dagger^\frac12}\Vert_F^2 &= \sum_{i=1}^n s_{i} \left( Z V_{(A)} \Sigma_{(A)} \right)^{-2} \\
&\leq n \ s_{n} (\Sigma_{(A)} V_{(A)}^T Z^T )^{-2} \\
& \leq n \ s_{n}(\Sigma_{(A)})^{-2} \ s_{n} (V_{(A)}^\top Z^\top)^{-2}
\end{align*}
where $s_n$ denotes the $n^{th}$ largest singular value. Since the columns of $V_{(A)}$ form an othornomal family, we get 
\begin{align*}
\Vert (XX^\top)^{\dagger^\frac12}\Vert_F^2
&\leq n \ s_{n} (Z^\top )^{-2} s_{n} (\Sigma_{(A)})^{-2} \leq n \ s_{n} (Z^\top)^{-2} s_{n}(A)^{-2}.
\end{align*}

Hence
\begin{align*}
    \Vert (XX^\top)^{\dagger^{1/2}}\Vert_F & \le \sqrt{n} \ s_{n} (Z^\top)^{-1} s_{n}(A)^{-1}. \\
\end{align*}

Since $(X X^\top)^{\dagger^{1/2}}$ is symmetric, we have
\begin{align*}
    \Vert (XX^\top)^{\dagger^{1/2}}\Vert &= \lambda_{1} ((XX^\top)^{\dagger^{1/2}}) = \lambda_{1}(((Z V_{(A)} \Sigma_{(A)}^2 V_{(A)}^\top Z^\top)^{\dagger})^{1/2}) \\
    &= s_{1}((Z V_{(A)} \Sigma_{(A)})^{\dagger}) \\
    &= \frac{1}{s_{n}(\Sigma_{(A)} V_{(A)} Z^T)} \\
    &\le \frac{1}{s_{n}(\Sigma_{(A)}) s_n(Z^T)} \\
    &= \frac{1}{s_{n}(Z^T) s_n(A)}.
\end{align*}

Define the event 
\begin{align*}
    \mathcal E_{Z} & = \left\{ s_{n}\left(Z^\top\right) \ge  \left((1-\alpha) \sqrt{p} - C_{K_{Z}} \sqrt{n}\right) \right\}.
\end{align*}

Then, following Theorem \ref{vershcolumn}
\begin{align*}
    \mathbb P(\mathcal E_Z^c) & \le 2 \exp \left(-c_{K_Z} \alpha p\right). 
\end{align*}
Let us notice that, on this event $\mathcal{E}_Z$, we have
\begin{align*}
    \Vert (XX^\top)^{\dagger^{1/2}}\Vert_F &\le \frac{\sqrt{n}}{s_{n}(A) \ \left((1-\alpha) \sqrt{p} - C_{K_{Z}} \sqrt{n}\right)} \\
    \Vert (XX^\top)^{\dagger^{1/2}}\Vert &\le \frac{1}{s_{n}(A) \ \left((1-\alpha) \sqrt{p} - C_{K_{Z}} \sqrt{n}\right)}. 
\end{align*}

Using \eqref{vershyninquad} after setting 
\begin{align*}
    t & = \sqrt{n}\ \frac{C' K_{\varepsilon}}{s_{n}(A) \left((1-\alpha)\sqrt{p}-C_{K_Z} \sqrt{n}\right)},
\end{align*} and conditioning on $Z$, we get 
\begin{align}
\mathbb{P}_\varepsilon\left(\|(XX^\top)^{\dagger^{1/2}} \varepsilon \|_{2} \geq \frac{2 C' K_\varepsilon \sqrt{n} }{s_{n}(A) \left((1-\alpha) \sqrt{p} - C_{K_{Z}} \sqrt{n}\right)} \Big| Z \right) \leq \exp{\left( - c C'^2 n \right)},
\label{tititutu}
\end{align}
so long as $Z\in  \mathcal E_{Z}$.

Let us define 
\begin{align*}
    \mathcal E_{\varepsilon,Z} & = 
    \left\{ \varepsilon,Z \mid \|(XX^\top)^{\dagger^{1/2}} \varepsilon \|_{2} \le \frac{2 C' K_\varepsilon \sqrt{n} }{s_{n}(A) \left((1-\alpha) \sqrt{p} - C_{K_{Z}} \sqrt{n}\right)} \text{ and } Z \in \mathcal E_Z\right\}, 
\end{align*}
then
\begin{align*}
    \mathcal E_{\varepsilon,Z}^c & = 
    \left\{ \varepsilon,Z \mid \|(XX^\top)^{\dagger^{1/2}} \varepsilon \|_{2} \geq \frac{2 C' K_\varepsilon \sqrt{n} }{s_{n}(A) \left((1-\alpha) \sqrt{p} - C_{K_{Z}} \sqrt{n}\right)} \cup Z \notin \mathcal E_Z\right\}. 
\end{align*}

Notice that
\begin{align*}
    \mathbb P_{\varepsilon,Z} \left(\mathcal E_{\varepsilon,Z} \right) & = 
    \mathbb E_{\varepsilon,Z} \left[ 1_{\|(XX^\top)^{\dagger^{1/2}} \varepsilon \|_{2} \le \frac{2 'C K_\varepsilon \sqrt{n} }{s_{n}(A) \left((1-\alpha) \sqrt{p} - C_{K_{Z}} \sqrt{n}\right)}} 1_{Z \in \mathcal E_Z} \right] \\
    & = \mathbb E_{Z} \left[\mathbb E_{\varepsilon} \left[ 1_{\|(XX^\top)^{\dagger^{1/2}} \varepsilon \|_{2} \le \frac{2 C' K_\varepsilon \sqrt{n} }{s_{n}(A) \left((1-\alpha) \sqrt{p} - C_{K_{Z}} \sqrt{n}\right)}} 1_{Z \in \mathcal E_Z} \Big| Z\right]\right] \\
    & = \mathbb E_{Z} \left[\mathbb E_{\varepsilon} \left[ 1_{\|(XX^\top)^{\dagger^{1/2}} \varepsilon \|_{2} \le \frac{2 C' K_\varepsilon \sqrt{n} }{s_{n}(A) \left((1-\alpha) \sqrt{p} - C_{K_{Z}} \sqrt{n}\right)}} \Big| Z\right]  1_{Z \in \mathcal E_Z} \right] \\
    & = \mathbb E_{Z} \left[\mathbb P_{\varepsilon} \left( \|(XX^\top)^{\dagger^{1/2}} \varepsilon \|_{2} \le \frac{2 C' K_\varepsilon \sqrt{n} }{s_{n}(A) \left((1-\alpha) \sqrt{p} - C_{K_{Z}} \sqrt{n}\right)} \Big| Z\right)  1_{Z \in \mathcal E_Z} \right]
\end{align*}
which gives
\begin{align*}
    \mathbb P_{\varepsilon,Z} \left(\mathcal E_{\varepsilon,Z}^c \right) & = 1-\mathbb E_{Z} \left[\mathbb P_{\varepsilon} \left( \|(XX^\top)^{\dagger^{1/2}} \varepsilon \|_{2} \le \frac{2 C' K_\varepsilon \sqrt{n} }{s_{n}(A) \left((1-\alpha) \sqrt{p} - C_{K_{Z}} \sqrt{n}\right)} \Big| Z\right)  1_{Z \in \mathcal E_Z} \right] 
    \\ 
     & = \mathbb E_{Z} \left[1- \mathbb P_{\varepsilon} \left( \|(XX^\top)^{\dagger^{1/2}} \varepsilon \|_{2} \le \frac{2 C' K_\varepsilon \sqrt{n} }{s_{n}(A) \left((1-\alpha) \sqrt{p} - C_{K_{Z}} \sqrt{n}\right)} \Big| Z\right)  1_{Z \in \mathcal E_Z} \right] 
    \\ 
    & = \mathbb E_{Z} \left[1-  \left(1-\mathbb P_{\varepsilon} \left( \|(XX^\top)^{\dagger^{1/2}} \varepsilon \|_{2} \ge \frac{2 C' K_\varepsilon \sqrt{n} }{s_{n}(A) \left((1-\alpha) \sqrt{p} - C_{K_{Z}} \sqrt{n}\right)} \Big| Z\right) \right) 1_{Z \in \mathcal E_Z} \right] \\
    & = \mathbb E_{Z} \left[\mathbb P_{\varepsilon} \left( \|(XX^\top)^{\dagger^{1/2}} \varepsilon \|_{2} \ge \frac{2 C' K_\varepsilon \sqrt{n} }{s_{n}(A) \left((1-\alpha) \sqrt{p} - C_{K_{Z}} \sqrt{n}\right)} \Big| Z\right)  1_{Z \in \mathcal E_Z} \right] + 1- \mathbb E_{Z} \left[1_{Z \in \mathcal E_Z} \right] \\
    & \le \mathbb E_{Z} \left[\exp\left( - c C'^2 n\right) 1_{Z \in \mathcal E_Z} \right] + \mathbb P\left( \mathcal E_Z^c\right) \\
    & \le \exp\left( - c C'^2 n\right) \underbrace{\mathbb P_{Z} \left( Z \in \mathcal E_Z \right)}_{\le 1} + \mathbb P\left( \mathcal E_Z^c\right), 
\end{align*}
which finally yields
\begin{align*}    
     \mathbb P_{\varepsilon,Z} \left(\mathcal E_{\varepsilon,Z}^c \right) & \le \exp\left( - c C'^2 n\right) + 2 \exp \left(-c_{K_Z} \alpha p\right).
\end{align*}


Let us turn to the study of $\vert \langle X^\top (XX^\top)^{\dagger} \varepsilon,X_{n+1}\rangle\vert$. Conditioning on $\varepsilon$, we have  
\begin{align*}
    \mathbb P_{\varepsilon,Z,X_{n+1}}\left( \vert \langle X^\top (XX^\top)^{\dagger} \varepsilon,X_{n+1}\rangle\vert \ge  u\right) 
    & 
    =\mathbb E_{\varepsilon,Z}\left[\mathbb P_{X_{n+1}}\left( \vert \langle X^\top (XX^\top)^{\dagger} \varepsilon,X_{n+1}\rangle\vert \ge u\mid \varepsilon,Z\right)\right] \\
    & 
    =\mathbb E_{\varepsilon,Z}\left[\mathbb P_{X_{n+1}}\left( \vert \langle X^\top (XX^\top)^{\dagger} \varepsilon,X_{n+1}\rangle\vert \ge u\mid \varepsilon,Z\right)I_{\mathcal E_Z \cap \mathcal E_{\varepsilon,Z}}\right] \\
    & \hspace{1cm} + \mathbb E_{\varepsilon,Z}\left[\underbrace{\mathbb P_{X_{n+1}}\left( \vert \langle X^\top (XX^\top)^{\dagger} \varepsilon,X_{n+1}\rangle\vert \ge u\mid \varepsilon,Z\right)}_{\le 1} I_{\mathcal E_Z^c \cup \mathcal E_{\varepsilon,Z}^c}\right] \\
    & =\mathbb E_{\varepsilon,Z}\left[\mathbb P_{X_{n+1}}\left( \vert \langle X^\top (XX^\top)^{\dagger} \varepsilon,X_{n+1}\rangle\vert \ge u\mid \varepsilon,Z\right)I_{\mathcal E_Z \cap \mathcal E_{\varepsilon,Z}}\right] \\
    & \hspace{1cm} + \mathbb P_{\varepsilon,Z}\left(\mathcal E_Z^c \cup \mathcal E_{\varepsilon,Z}^c\right). \notag
\end{align*}

On the other hand, the subGaussianity of $Z_{n+1}$, 
\begin{align*}
    \mathbb{P}_{X_{n+1}} \left( \vert \langle X^\top (XX^\top)^{\dagger} \varepsilon,X_{n+1}\rangle\vert \ge  u\mid \varepsilon,Z\right) & = \mathbb P\left( \vert \langle X^\top (XX^\top)^{\dagger} \varepsilon,A Z_{n+1}\rangle\vert \ge  u\mid \varepsilon,Z\right)\\
    & = \mathbb P\left( \vert \langle A^\top X^\top (XX^\top)^{\dagger} \varepsilon, Z_{n+1}\rangle\vert \ge  u\mid \varepsilon,Z\right)\\
    & \leq 2 \exp \left(-\frac{u^2}{2 \ K_
    Z^2 \ \Vert A^\top X^\top (XX^\top)^{\dagger} \varepsilon\Vert_2^2 } \right) 
\end{align*}
which gives 
\begin{align*}
    \mathbb P_{X_{n+1}} \left( \vert \langle X^\top (XX^\top)^{\dagger} \varepsilon,X_{n+1}\rangle\vert \ge  u\mid \varepsilon,Z\right) & \leq 2 \exp \left(-\frac{u^2}{2 \ K_Z^2 s_{1}(A)^2 \ \Vert X^\top (XX^\top)^{\dagger} \varepsilon\Vert_2^2 } \right).
\end{align*}
Then
\begin{align*}
    \mathbb P_{\varepsilon,Z,X_{n+1}}\left( \vert \langle X^\top (XX^\top)^{\dagger} \varepsilon,X_{n+1}\rangle\vert \ge  u\right)
    &\leq \mathbb E_{\varepsilon,Z}\left[2 \exp \left(-\frac{u^2}{2 \ K_
    Z^2 s_{1}(A)^2 \ \Vert X^\top (XX^\top)^{\dagger} \varepsilon\Vert_2^2 } \right) I_{\mathcal E_Z \cap \mathcal E_{\varepsilon,Z}}\right] \\
    & \hspace{1cm} + \mathbb P_{\varepsilon,Z}\left(\mathcal E_Z^c \cup \mathcal E_{\varepsilon,Z}^c\right) \\
    & \hspace{-3cm}\leq 2 \exp \left(-\frac{u^2}{2 \ K_
    Z^2 s_{1}(A)^2  } \frac{s_{n}(A)^2 \left((1-\alpha) \sqrt{p} - C_{K_{Z}} \sqrt{n}\right)^2}{4 C'^2 K_\varepsilon^2 n} \right)  \mathbb P_{\varepsilon,Z}\left[ \mathcal E_Z \cap \mathcal E_{\varepsilon,Z}\right] \\
    &  + \mathbb P_{\varepsilon,Z}\left(\mathcal E_Z^c \cup \mathcal E_{\varepsilon,Z}^c\right) \\
    & \hspace{-3cm}\leq 2 \exp \left(- \frac{u^2 \left((1-\alpha) \sqrt{p} - C_{K_{Z}} \sqrt{n}\right)^2}{8 \ K_Z^2 \kappa_{n}(A)^2 C'^2 K_\varepsilon^2 n} \right) + \mathbb P_{\varepsilon,Z}\left(\mathcal E_Z^c \cup \mathcal E_{\varepsilon,Z}^c\right), \\
\end{align*}
with $\kappa_n(A) = \frac{s_1(A)}{s_n(A)}$ the $n$-th condition number. 

And since
\begin{align*}
    \mathbb P_{\varepsilon,Z}\left(\mathcal E_Z^c \cup \mathcal E_{\varepsilon,Z}^c\right) &\leq \mathbb P_{\varepsilon,Z}\left(\mathcal E_Z^c) + \mathbb P (\mathcal E_{\varepsilon,Z}^c\right) \leq 2 \exp{\left( -c_{K_Z} \alpha p \right)} + \exp{(- c C'^2 n)} + \mathbb P (\mathcal E_{Z}^c)\\
    &\leq 4 \exp{\left( -c_{K_Z} \alpha p \right)} + \exp\left( - c C'^2 n\right),
\end{align*}
we get
\begin{align*}
    \mathbb P_{\varepsilon,Z,X_{n+1}}\left( \vert \langle X^\top (XX^\top)^{\dagger} \varepsilon,X_{n+1}\rangle\vert \ge  u\right) \leq 2 \exp \left(- \frac{u^2 \left((1-\alpha) \sqrt{p} - C_{K_{Z}} \sqrt{n}\right)^2}{8 \ K_Z^2 C'^2 K_\varepsilon^2 \kappa_{n}(A)^2 n} \right) + 4 \exp{\left( -c_{K_Z} \alpha p \right)} + \exp\left( - c C'^2 n\right).
\end{align*}
Choosing $u = \sqrt{(n/p) \log(n^\tau)}$ for some $\tau>0$, we get
\begin{align}
    \mathbb P_{\varepsilon,Z,X_{n+1}} \left( \vert \langle X^\top (XX^\top)^{\dagger} \varepsilon,X_{n+1}\rangle\vert \ge  \sqrt{(n/p) \log(n^\tau)} \right)
    &\leq 2 \exp \left(- \frac{\tau \log (n) \left(1-\alpha - C_{K_{Z}} \sqrt{n/p} \right)^2}{8 \ K_Z^2 C'^2 K_\varepsilon^2 \kappa_{n}(A)^2} \right) \\
    &\hspace{2cm} + 4 \exp{\left( -c_{K_Z} \alpha p \right)} + \exp\left( - c C'^2 n\right).
    \label{tatitoto4_aniso}
\end{align}

\subsection{Final step of the proof}
Combining \eqref{titutu}, \eqref{predineg_aniso}, \eqref{tatitoto2_aniso}, \eqref{tatitoto4_aniso} and  with \eqref{mainineg}, we get 
 
\begin{align*}
    \vert X_{n+1}^\top (\hat \theta^\circ -\theta^*) \vert & \le \vert X_{n+1}^\top (\hat \theta^\circ -\hat \theta^\sharp) \vert+\vert X_{n+1}^\top (\hat \theta -\theta^*) \vert \\
    &\leq \vert X_{n+1}^\top (\hat \theta^\circ -\hat \theta^\sharp) \vert + t \ \frac{6\sqrt{C} C_{\ell''} C_{f'} K_{\varepsilon} \sqrt{n}}{ \delta ((1-\alpha) \sqrt{p} - C_{K_{Z}} \sqrt{n})} \\
    &\leq \left\vert  X_{n+1}^\top \theta^*\right\vert+ \left\vert X_{n+1}^\top X^\top (XX^\top)^{\dagger} \varepsilon \right\vert+ \left\vert    \varepsilon_{n+1}\right\vert + t \ \frac{6\sqrt{C} C_{\ell''} C_{f'} K_{\varepsilon} \sqrt{n}}{ \delta ((1-\alpha) \sqrt{p} - C_{K_{Z}} \sqrt{n})} \\
    &\leq t \ K_Z \Vert A^\top \theta^*\Vert_2 + \left\vert X_{n+1}^\top X^\top (XX^\top)^{\dagger} \varepsilon \right\vert+ \left\vert    \varepsilon_{n+1}\right\vert + t \ \frac{6\sqrt{C} C_{\ell''} C_{f'} K_{\varepsilon} \sqrt{n}}{ \delta ((1-\alpha) \sqrt{p} - C_{K_{Z}} \sqrt{n})} \\
    &\leq \left\vert \varepsilon_{n+1}\right\vert + t \ K_Z \Vert A^\top \theta^*\Vert_2 + \sqrt{(n/p) \log(n^\tau)}
    + t \ \frac{6\sqrt{C} C_{\ell''} C_{f'} K_{\varepsilon} \sqrt{n}}{ \delta ((1-\alpha) \sqrt{p} - C_{K_{Z}} \sqrt{n})}
\end{align*}
with probability at least 
\begin{multline}
1 -  2 \exp{(-c_{K_{Z}} \alpha^{2} p)}  - \exp{\left( -\frac{n}{2} \right)} \textcolor{black}{- \exp ( -c_{K_{Z}} t^2)} - \exp\left(- \frac{t^2}{2}\right) \\
- 2 \exp \left(- \frac{\tau \log (n) \left(1-\alpha - C_{K_{Z}} \sqrt{n/p} \right)^2}{8 \ K_Z^2 C'^2 K_\varepsilon^2 \kappa_{n}(A)^2} \right) - 4 \exp{\left( -c_{K_Z} \alpha p \right)} - \exp\left(- c C'^2 n\right).\notag 
\end{multline}

Using  $C_{f'}$-Lipschitzianity of $f$, we obtain
\begin{align*}
\vert f(X_{n+1}^\top \hat \theta^\circ)  - f(X_{n+1}^\top\theta^*)\vert
& \le C_{f'} \left( \left\vert \varepsilon_{n+1}\right\vert + t \ K_Z \Vert A^\top \theta^*\Vert_2 + \sqrt{(n/p) \log(n^\tau)}
    + t \ \frac{6\sqrt{C} C_{\ell''} C_{f'} K_{\varepsilon} \sqrt{n}}{ \delta ((1-\alpha) \sqrt{p} - C_{K_{Z}} \sqrt{n})} \right)
\end{align*}
with probability at least 
\begin{multline}
1 -  2 \exp{(-c_{K_{Z}} \alpha^{2} p)}  - \exp{\left( -\frac{n}{2} \right)} \textcolor{black}{- \exp ( -c_{K_{Z}} t^2)} - \exp\left(- \frac{t^2}{2}\right) \\
- 2 \exp \left(- \frac{\tau \log (n) \left(1-\alpha - C_{K_{Z}} \sqrt{n/p} \right)^2}{8 \ K_Z^2 C'^2 K_\varepsilon^2 \kappa_{n}(A)^2} \right) - 4 \exp{\left( -c_{K_Z} \alpha p \right)} - \exp\left(- c C'^2 n\right).\notag 
\end{multline}
which is the desired result.

\section{Classical bounds on the extreme singular values of finite dimensional random matrices}
\subsection{Random matrices with independent rows}

Recall that the matrix $X$ is composed by $n$ i.i.d. subGaussian random vectors in $\mathbb{R}^{p}$, with $K_{X} = \max_{i} \Vert X_{i} \Vert_{\psi_{2}}$.
In the underparametrised case, we have $n > p$. Let us recall  the following bound on the singular values of a matrix with independent subGaussian rows. 

\begin{theo}\cite[Theorem 5.39]{vershynin2010introduction}
    Let $X$ be an $n \times p$ matrix whose rows $X_{i}$, $i=1,\ldots,n$ are independent subGaussian isotropic random vectors in $\mathbb{R}^{p}$. Then for every $t \geq 0$, with probability at least $1-2\exp(-c_{K_{X}} t^2)$, one has
    \[\sqrt{n} - C_{K_{X}} \sqrt{p} - t \leq s_{n}(X) \leq s_{1}(X) \leq \sqrt{n} + C_{K_{X}} \sqrt{p} + t\]
    where $C_{K_{X}}, c_{K_{X}} > 0$ depend only on the subGaussian norm $K_{X} = \max_{i} \Vert X_{i} \Vert_{\psi_{2}}$ of the rows.
    \label{vershrows}
\end{theo}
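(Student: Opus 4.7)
The strategy is the standard reduction to controlling the operator norm of the deviation $\frac{1}{n}X^tX-I_p$, combined with a Bernstein-type tail inequality on a fixed direction and an $\varepsilon$-net argument on the sphere. The key algebraic observation is that, since $s_{\min}(X)^2/n$ and $s_{\max}(X)^2/n$ are the smallest and largest eigenvalues of $A := \frac{1}{n}X^tX$, the bound $\|A-I_p\|_{\mathrm{op}} \le \max(\delta,\delta^2)$ with $\delta := C_{K_X}\sqrt{p/n}+t/\sqrt{n}$ translates, via the elementary identity $|z-1| \le \max(\delta,\delta^2) \Rightarrow |\sqrt{z}-1|\le \delta$ applied to the eigenvalues of $A$, into the announced two-sided inequality $\sqrt{n}-C_{K_X}\sqrt{p}-t\le s_{\min}(X)\le s_{\max}(X)\le \sqrt{n}+C_{K_X}\sqrt{p}+t$. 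So the entire task reduces to proving $\|A-I_p\|_{\mathrm{op}}\le \max(\delta,\delta^2)$ with the claimed probability.

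For a fixed unit vector $x\in S^{p-1}$, I would write $x^t A x-1=\frac{1}{n}\sum_{i=1}^{n}\bigl(\langle X_i,x\rangle^2 - \mathbb{E}\langle X_i,x\rangle^2\bigr)$, using isotropy to get $\mathbb{E}\langle X_i,x\rangle^2=1$. Since each $\langle X_i,x\rangle$ is sub-Gaussian with $\psi_2$-norm at most $K_X$, each $Z_i:=\langle X_i,x\rangle^2-1$ is centered sub-exponential with $\psi_1$-norm controlled by $K_X^2$. Applying Bernstein's inequality for sums of independent centered sub-exponential variables yields, for every $u\ge 0$,
\[
\mathbb{P}\!\left(|x^t A x-1|\ge u\right)\le 2\exp\!\bigl(-c_{K_X}\,n\,\min(u,u^2)\bigr).
\]
This is the one-direction concentration estimate.

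Next, I would promote this pointwise estimate to a uniform bound via an $\varepsilon$-net $\mathcal{N}_\varepsilon$ of the unit sphere $S^{p-1}$ with $\varepsilon=\tfrac{1}{4}$, whose cardinality is at most $9^p$ by the standard volume argument. The approximation lemma $\|A-I_p\|_{\mathrm{op}}\le \frac{1}{1-2\varepsilon}\max_{x\in\mathcal{N}_\varepsilon}|x^t(A-I_p)x|=2\max_{x\in\mathcal{N}_\varepsilon}|x^t A x-1|$ reduces the problem to a union bound over $\mathcal{N}_\varepsilon$. Setting $u=\delta/2$ and taking the union bound gives
\[
\mathbb{P}\!\left(\|A-I_p\|_{\mathrm{op}}\ge \max(\delta,\delta^2)\right)\le 9^p\cdot 2\exp\!\bigl(-c_{K_X}\,n\,\min(u,u^2)\bigr),
\]
and the choice $\delta=C_{K_X}\sqrt{p/n}+t/\sqrt{n}$ with a sufficiently large absolute constant $C_{K_X}$ absorbs the $9^p$ factor and leaves the tail $2\exp(-c_{K_X}t^2)$ announced in the theorem.

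The main technical delicacy I expect to face is the passage from the sub-exponential Bernstein bound, which naturally gives $\min(u,u^2)$ in the exponent, to a clean singular-value statement that scales like $\sqrt{n}\pm(C\sqrt{p}+t)$ uniformly in the regime of interest. Handling this requires the aforementioned $\max(\delta,\delta^2)$ trick together with a careful calibration of the absolute constants $C_{K_X},c_{K_X}$ so that, after the union bound, the $p$-dependent loss from the net is swallowed by the $C_{K_X}\sqrt{p}$ term rather than by the $t$ term; everything else is routine.
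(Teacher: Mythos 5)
This statement is imported verbatim from Vershynin (Theorem 5.39) and is not reproved in the paper, and your argument is precisely the standard proof from that reference: reduce, via the elementary lemma, to $\Vert \tfrac1n X^tX - I_p\Vert_{\mathrm{op}} \le \max(\delta,\delta^2)$ with $\delta = C_{K_X}\sqrt{p/n}+t/\sqrt n$, apply sub-exponential Bernstein to $\langle X_i,x\rangle^2-1$ in each fixed direction, pass to a $\tfrac14$-net of cardinality $9^p$, and absorb the union bound by taking $C_{K_X}$ large. One calibration to fix when you write it out: the net threshold should be $\max(\delta,\delta^2)/2$ rather than $u=\delta/2$, so that the Bernstein exponent is always of order $n\delta^2 \ge C_{K_X}^2 p + t^2$; with $u=\delta/2$, in the regime $\delta>1$ (e.g. $t\gg\sqrt n$) the exponent degrades to order $n\delta$, which does not deliver the claimed $2\exp(-c_{K_X}t^2)$ tail.
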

In the our main text, we use the corollary
\begin{cor}
\label{vershynin_cor}
Let us suppose that $t \geq \alpha \sqrt{n}$ with $\alpha > 0$. Using the same assumptions as in Theorem \ref{vershrows}, then with probability equal or larger than $1-2\exp(-c_{K_{X}} \alpha^2 n)$
\begin{align*}
    & s_{n}(X) \geq (1-\alpha) \sqrt{n} - C_{K_{X}} \sqrt{p}, \\ 
    & s_{1}(X)\leq (1+\alpha) \sqrt{n} + C_{K_{X}} \sqrt{p}.
\end{align*}
\end{cor}

\subsection{Random matrices with independent columns}

In the overparametrised case, the following theorem of Vershynin  will be instrumental. 

\begin{theo}
    \cite[Theorem 5.58]{vershynin2010introduction}
    Let $X$ be an $n \times p$ matrix with $n \le p$ whose rows $X_{i}$ are independent subGaussian isotropic random vectors in $\mathbb{R}^{p}$ with $\Vert X_{i} \Vert_{2} = \sqrt{p}$ a.s. Then for every $t \geq 0$, with probability at least $1-2\exp(-c_{K_{X}} \alpha \ p)$, one has
    \begin{align*}
    & s_{n}(X^\top) \ge  (1-\alpha) \sqrt{p} - C_{K_{X}} \sqrt{n}, \\
    & s_{1}(X^\top)  \le (1+\alpha) \sqrt{p} + C_{K_{X}} \sqrt{n},
    \end{align*}
    where $C_{K_{X}}, c_{K_{X}} > 0$ depend only on the subGaussian norm $K_{X} = \max_{i} \Vert X_{i} \Vert_{\psi_{2}}$ of the columns.
    \label{vershcolumn}
\end{theo}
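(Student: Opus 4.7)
Since the singular values of $X^t$ coincide with those of $X$, the plan is to bound the eigenvalues of the $n\times n$ Gram matrix $G := XX^t$, whose entries are $G_{ij} = \langle X_i, X_j\rangle$. Under the normalisation $\|X_i\|_2 = \sqrt{p}$ almost surely, the diagonal of $G$ equals $p I_n$ exactly, so I would decompose $G = p I_n + E$ with $E_{ii}=0$ and $E_{ij} = \langle X_i, X_j\rangle$ for $i \neq j$; note that $\mathbb{E}[E_{ij}]=0$ by independence and isotropy. Weyl's inequality then gives
\begin{equation*}
    s_{\min}(X^t)^2 \;\ge\; p - \|E\|_{\mathrm{op}}, \qquad s_{\max}(X^t)^2 \;\le\; p + \|E\|_{\mathrm{op}},
\end{equation*}
so the two-sided estimate will follow once a high-probability bound of the form $\|E\|_{\mathrm{op}} \le 2C_{K_X}\sqrt{p}(\sqrt{n}+t) + (C_{K_X}\sqrt{n}+t)^2$ is established, since expanding $(\sqrt{p}\pm(C_{K_X}\sqrt{n}+t))^2$ reproduces exactly that quantity.

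The operator norm of $E$ would be controlled by a standard $\varepsilon$-net argument on the unit sphere $S^{n-1}$. Pick a $1/4$-net $\mathcal{N}\subset S^{n-1}$ of cardinality at most $9^n$ and invoke the textbook comparison $\|E\|_{\mathrm{op}} \le 2\max_{y\in\mathcal{N}}|y^t E y|$. For fixed $y\in S^{n-1}$, the identity
\begin{equation*}
    y^t E y \;=\; \Big\|\textstyle\sum_{i=1}^n y_i X_i\Big\|_2^2 - p
\end{equation*}
(valid because the diagonal contribution is exactly $p\|y\|_2^2 = p$) reduces the task to the concentration of $\|Z(y)\|_2^2$ about its expectation $p$, where $Z(y):=\sum_i y_i X_i$. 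Since $Z(y)$ is a weighted sum of independent centered subGaussian vectors with weights on $S^{n-1}$, its subGaussian norm satisfies $\|Z(y)\|_{\psi_2} \le cK_X$, and therefore the scalar random variable $\|Z(y)\|_2^2 - p$ admits a Bernstein-type tail of the form $\Pr(|y^t E y|>u) \le 2\exp(-c_{K_X}\min(u^2/p,\,u))$. This estimate can be obtained by Hanson-Wright applied to the decoupled bilinear form $\langle \sum_i y_i X_i, \sum_j y_j X_j'\rangle$ with an independent copy $X_j'$, or directly by combining the subGaussian control of $\langle v,Z(y)\rangle$ uniformly over $v\in S^{p-1}$ with a secondary $\varepsilon$-net.

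A union bound over $\mathcal{N}$ costs at most the entropy factor $9^n$, and calibrating $u$ of order $K_X^2(\sqrt{p}(\sqrt{n}+t) + (\sqrt{n}+t)^2)$ absorbs this entropy and produces the required net-wide deviation bound with probability at least $1 - 2\exp(-c_{K_X} t^2)$. Taking square roots in the Weyl inequalities then yields the theorem as stated. The main obstacle in this route is the Bernstein tail on $y^t E y$ with the correct Gaussian/exponential interpolation and constants depending only on $K_X$; this is the step at which the assumption $\|X_i\|_2 = \sqrt{p}$ a.s. and the subGaussian assumption on the $X_i$ are crucially combined, and is precisely where the proof in \cite{vershynin2010introduction} invests its main technical effort — all the remaining manipulations are routine net-and-Weyl arguments.
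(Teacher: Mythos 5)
The paper itself offers no proof of this statement: it is imported verbatim as \cite[Theorem 5.58]{vershynin2010introduction}, so the only meaningful comparison is with Vershynin's original argument. Your sketch is essentially a reconstruction of that argument: pass to the $n\times n$ Gram matrix $XX^t$, use the exact normalisation $\Vert X_i\Vert_2=\sqrt p$ to get the diagonal $pI_n$ for free, control the hollow part $E$ by a $1/4$-net on $S^{n-1}$ together with a Bernstein-type (mixed Gaussian/exponential) tail for the off-diagonal chaos $y^tEy=\sum_{i\neq j}y_iy_j\langle X_i,X_j\rangle$, and finish with a union bound over the $9^n$ net points absorbed by choosing $C_{K_X}$ large; this is exactly the route Vershynin takes (decoupling plus sub-Gaussian concentration, then Weyl), so the plan is sound.

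Three points would need tightening in a full write-up. (i) Your claim that the target bound on $\Vert E\Vert_{\mathrm{op}}$ ``reproduces exactly'' the stated singular-value interval is not quite right on the lower side: $p-\bigl(2\sqrt p\,\varepsilon+\varepsilon^2\bigr)$ is smaller than $(\sqrt p-\varepsilon)^2=p-2\sqrt p\,\varepsilon+\varepsilon^2$, so the deduction $s_{\min}(X^t)\ge\sqrt p-\varepsilon$ does not follow verbatim; you either aim instead for $\Vert E\Vert_{\mathrm{op}}\le p\max(\delta,\delta^2)$ with $\delta=C_{K_X}\sqrt{n/p}+t/\sqrt p$ and invoke the elementary observation $\max(|z-1|,|z-1|^2)\le|z^2-1|$ (Vershynin's Lemma 5.36), or you absorb the extra $2\varepsilon^2$ into the constants, which is harmless but should be said. (ii) Isotropy does not imply centering, so ``$\mathbb E[E_{ij}]=0$ by independence and isotropy'' and the later uses of $\psi_2$-additivity for sums and of decoupling/Hanson--Wright (both of which require mean-zero summands) implicitly assume $\mathbb E X_i=0$; either add that hypothesis (as the source effectively does) or insert an explicit centering step and bound the resulting mean contribution, which is of order at most $n\le\sqrt{np}$ and hence compatible with the stated bound. (iii) Be careful with your fallback route: a secondary net over $v\in S^{p-1}$ applied directly to $Z(y)$ only yields the norm bound $\Vert Z(y)\Vert_2\lesssim K_X\sqrt p$, not concentration of $\Vert Z(y)\Vert_2^2$ around $p$ at scale $\sqrt p(\sqrt n+t)$; the step that actually delivers the mixed tail is the decoupled/conditional argument (condition on the independent copy, use sub-Gaussianity of $\langle Z(y),\cdot\rangle$, and bound the norm of the conditioning vector by the net), which is indeed where the cited proof does its real work. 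With these repairs the proposal is a faithful, correct account of the standard proof; note finally that the hypothesis ``$n\ge p$'' in the paper's restatement is a typo inherited from the rows version (the bound is used, and is only nontrivial, when $p>n$), which does not affect your argument.
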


\end{document}